\newfont{\mycrnotice}{ptmr8t at 7pt}
\newfont{\myconfname}{ptmri8t at 7pt}
\newcolumntype{R}[1]{>{\raggedleft\let\newline\\\arraybackslash\hspace{0pt}}m{#1}}
\newtheorem{proposition}{Proposition}
\renewcommand{\d}{\mathbf{d}}
\newcommand{\V}{V}
\newcommand{\A}{\mathbb{A}}
\newcommand{\C}{\mathbb{C}}
\renewcommand{\P}{\mathbb{P}}
\newcommand{\R}{\mathbb{R}}
\newcommand{\Y}{\mathbb{Y}}
\newcommand{\TTuple}[1][0.0ex]{\vec{t}\hspace{#1}}
\newcommand{\UTuple}[1][0.0ex]{\vec{u}\hspace{#1}}
\newcommand{\VTuple}{\vec{v}}
\newcommand{\Mrange}[1]{\ifthenelse{\equal{#1}{T}}{\TTuple_m}{\ifthenelse{\equal{#1}{U}}{\UTuple_m}{\ifthenelse{\equal{#1}{V}}{\VTuple_m}{\mbox{UNKNOWN
TERM ID}}}}}
\newcommand{\Prange}[1]{\ifthenelse{\equal{#1}{T}}{\vec{t}_{pa}}{\ifthenelse{\equal{#1}{U}}{\vec{u}_{pa}}{\ifthenelse{\equal{#1}{V}}{\vec{v}_{pa}}{\mbox{UNKNOWN
TERM ID}}}}}
\newcommand{\GroundPrange}[1]{\ifthenelse{\equal{#1}{T}}{\vec{t}_{pa,\grounding'}}{\ifthenelse{\equal{#1}{U}}{\vec{u}_{pa,\grounding'}}{\ifthenelse{\equal{#1}{V}}{\vec{v}_{pa,\grounding'}}{\mbox{UNKNOWN
TERM ID}}}}}
\newcommand{\qcount}{\it{count}}
\newcommand{\project}{\pi}
\newcommand{\select}{\sigma}
\newcommand{\condition}{\chi}
\newcommand{\selectcond}{\phi}
\newcommand{\MJ}{MJ\:}
\newcommand{\population}{\mathcal{P}}
\newcommand{\outdomain}{V}
\newcommand{\argterms}{\bs{\tau}}
\newcommand{\grounding}{\gamma}
\newcommand{\row}{r}
\newcommand{\statistics}{s}
\newcommand{\Relation}{R} 
\newcommand{\eatts}{\it{1Atts}}
\newcommand{\eatt}{\it{1Att}}
\newcommand{\ratts}{\it{2Atts}}
\newcommand{\atts}{\it{Atts}}
\newcommand{\Nodes}{\it{Vars}}
\newcommand{\ColumnList}{\it{CL}}
\newcommand{\G}{G}
\newcommand{\X}{\mathbb{X}}
\renewcommand{\R}{R} 
\newcommand{\D}{\mathcal{D}} 
\renewcommand{\S}{\mathbb{S}} 
\newcommand{\student}{\mathit{Student}}
\newcommand{\reg}{\mathit{Registration}}
\newcommand{\ra}{\mathit{RA}}
\newcommand{\true}{\mathrm{T}}
\newcommand{\false}{\mathrm{F}}
\newcommand{\functor}{f}
\def\set#1{\mathbf{#1}}
\def\bs#1{\boldsymbol{#1}}
\newcommand{\ct}{\mathit{ct}}
\begin{document}

\title{Computing Multi-Relational Sufficient Statistics for  Large Databases}

\numberofauthors{3}
\author{
\alignauthor
Zhensong Qian\\
       \affaddr{School of Computing Science}\\
       \affaddr{Simon Fraser University, CA}\\
       \email{zqian@sfu.ca}
\alignauthor
Oliver Schulte \\
       \affaddr{School of Computing Science}\\
       \affaddr{Simon Fraser University, CA}\\
       \email{oschulte@sfu.ca}
\alignauthor
Yan Sun\\
       \affaddr{School of Computing Science}\\
       \affaddr{Simon Fraser University, CA}\\
       \email{sunyans@sfu.ca}
}
\maketitle  

\begin{abstract} Databases contain information about which relationships do and do not hold among entities. To make this information accessible for statistical analysis requires computing sufficient statistics  that combine information from different database tables. Such statistics may involve any number of {\em positive and negative} relationships. With a naive enumeration approach, computing sufficient statistics for negative relationships is feasible only for small databases. We solve this problem with a new dynamic programming algorithm that performs a virtual join, where the requisite counts are computed without materializing join tables. 
Contingency table algebra is a new extension of relational algebra, that facilitates the efficient implementation of this M\"obius virtual join operation. 
The M\"obius Join scales to large datasets (over 1M tuples) with complex schemas. Empirical evaluation with seven benchmark datasets showed that information about the presence and absence of links can be exploited in feature selection, association rule mining, and Bayesian network learning. 

 \end{abstract}

\category{H.2.8}{Database Applications}{Data mining }
\category{H.2.4}{Systems}{Relational databases }

\keywords{sufficient statistics; multi-relational databases; virtual join; relational algebra}

\section{Introduction} Relational databases contain information about attributes of entities, and which relationships do and do not hold among entities. To make this information accessible for knowledge discovery requires requires computing {\em sufficient statistics}. For discrete data, these sufficient statistics are instantiation counts for conjunctive queries. 
For relational statistical analysis to discover cross-table correlations,  sufficient statistics must combine information from different database tables. This paper describes a new dynamic programming algorithm for computing cross-table sufficient statistics that may contain any number of {\em positive and negative} relationships. Negative relationships concern the nonexistence of a relationship. Our algorithm makes the joint presence/absence of relationships available as features for the statistical analysis of databases. For instance, such statistics are important for learning correlations between different relationship types (e.g., if user $u$ performs a web search for item $i$, is it likely that $u$ watches a video about $i$ ?). 

Whereas sufficient statistics with positive relationships only can be efficiently computed by SQL joins of existing database tables, a table join approach is not feasible for negative relationships. This is because we would have to enumerate all tuples of entities that are {\em not} related (consider the number of user pairs who are {\em not} friends on Facebook). The cost of the enumeration approach is close to materializing the Cartesian cross product of entity sets, which grows exponentially with the number of entity sets involved. It may therefore seem that sufficient statistics with negative relationships can be computed only for small databases. 
We show that on the contrary, assuming that sufficient statistics with positive relationships are available, extending them to negative relationships can be achieved in a highly scalable manner, which does not depend on the size of the database.

\emph{Virtual Join Approach.} Our approach to this problem introduces a new virtual join operation. A virtual join algorithm computes sufficient statistics {\em without} materializing a cross product \cite{Yin2004}. Sufficient statistics can be represented in contingency tables \cite{Moore1998}. Our virtual join operation is a dynamic programming algorithm that successively builds up a large contingency table from smaller ones, {\em without a need to access the original data tables}. We refer to it as the M\"obius Join since it is based on the M\"obius extension theorem \cite{Schulte2014}.

We introduce algebraic operations on contingency tables that generalize standard relational algebra operators. 
We establish a contingency table algebraic identity that reduces the computation of sufficient statistics with $k+1$ negative relationships to the computation of sufficient statistics with only $k$ negative relationships. 
The M\"obius Join applies the identity to construct contingency tables that involve $1,2,\ldots,\ell$ relationships (positive and negative), until we obtain a joint contingency table for all tables in the database. A theoretical upper bound for the number of contingency table operations required by the algorithm is $O(r \log r)$,  where $r$ is the number of sufficient statistics involving negative relationships. In other words, the number of table operations is nearly linear in the size of the required output. 

\emph{Evaluation.} We evaluate the M\"obius Join algorithm by computing contingency tables for seven real-world databases. The observed computation times exhibit the near-linear growth predicted by our theoretical analysis. 
They range from two seconds on the simpler database schemas to just over two hours for the most complex schema with over 1 million tuples from the IMDB database.

Given that computing sufficient statistics for negative relationships is {\em feasible}, the remainder of our experiments evaluate their {\em usefulness}. These sufficient statistics allow statistical analysis to utilize the absence or presence of a relationship as a feature. 
Our benchmark datasets provide evidence that the positive and negative relationship features enhance different types of statistical analysis, as follows. (1) Feature selection: When provided with sufficient statistics for negative and positive relationships,
a standard feature selection method selects relationship features for classification,
(2) Association Rule Mining: A standard association rule learning method includes many association rules with relationship conditions in its top 20 list. 
(3) Bayesian network learning. A Bayesian network provides a graphical summary of the probabilistic dependencies among relationships and attributes in a database. On the two databases with the most complex schemas, enhanced sufficient statistics lead to a clearly superior model (better data fit with fewer parameters). This includes a database that is an order of magnitude larger than the databases for which graphical models  have been learned previously \cite{Schulte2012}. 

\emph{Contributions.} Our main contributions are as follows.
\begin{enumerate}
\item A dynamic program to compute a joint contingency table for sufficient statistics that combine several tables, and that may involve any number of {\em positive and negative }relationships.
\item An extension of relational algebra for contingency tables that supports the dynamic program conceptually and computationally.
\end{enumerate}

We contribute open-source code that implements the M\"obius Join. All code and datasets are available on-line\cite{bib:jbnsite}. Our implementation makes extensive use of RDBMS capabilities. Like the BayesStore system \cite{Wang2008}, our system treats statistical components as first-class citizens in the database. Contingency tables are stored as database tables  in addition to the original data tables. We use SQL queries to construct initial contingency tables and to implement contingency table algebra operations. 

\emph{Paper Organization.} 
We review background for relational databases and statistical concepts. 
The main part of the paper describes the dynamic programming algorithm for computing a joint contingency table for all random variables. 
We define the contingency table algebra. 
A complexity analysis establishes feasible upper bounds on the number of contingency table operations required by the M\"obius Join algorithm. 
We also investigate the scalability of the algorithm empirically. 
The final set of experiments examines how the cached sufficient statistics support the analysis of cross-table dependencies for different learning and data mining tasks.

\section{Background and Notation}

\begin{figure}[htbp] 
 \centering
\resizebox{0.4\textwidth}{!}{
 \includegraphics[width=0.5\textwidth]{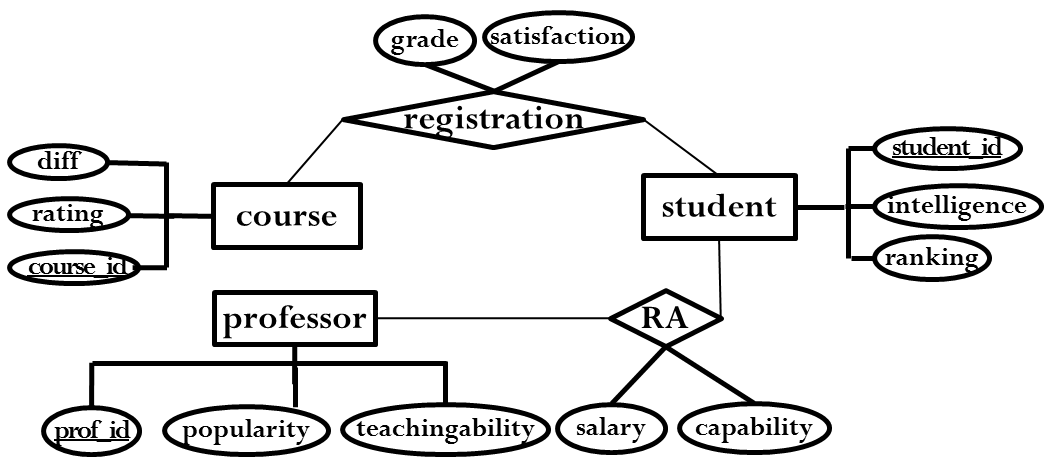}
} 
\caption{A relational ER Design. Registration and RA are many-to-many relationships.}
 \label{fig:university-schema}
\end{figure}
 We assume a standard \textbf{relational schema} containing a set of tables, each with key fields, 
descriptive attributes, and possibly foreign key pointers. 
A \textbf{database instance} specifies the tuples contained in the tables of a given database schema. 
We assume that tables in the relational schema can be divided into {\em entity tables} and {\em relationship tables.} 
This is the case whenever a relational schema is derived from an entity-relationship model (ER model) \cite[Ch.2.2]{Ullman1982}. A \textbf{table join} of two or more tables contains the rows in the Cartesian products of the tables whose values match on common fields.

\subsection{Relational Random Variables} \label{sec:variables}
We adopt function-based notation from logic for combining statistical and relational concepts \cite{Russell2010}.
A domain or \textbf{population} is a set of individuals.
Individuals are denoted by lower case expressions (e.g., $\it{bob}$). 
A \textbf{functor} represents a mapping
$\functor: \population_{1},\ldots,\population_{a} \rightarrow \outdomain_{\functor}$
where $\functor$ is the name of the functor, each $\population_{i}$ is a population, and $\outdomain_{\functor}$ is the output type or \textbf{range} of the functor. 
In this paper we consider only functors with a finite range, disjoint from all populations.  If $\outdomain_{\functor} = \{\true,\false\}$, the functor $\functor$ is a (Boolean) \textbf{predicate}. A predicate with more than one argument is called a \textbf{relationship}; other functors are called \textbf{attributes}. We use uppercase for predicates and lowercase for other functors. Throughout this paper we assume that all relationships are binary, though this is not essential for our algorithm.

A \textbf{(Parametrized) random variable} (PRV) is of the form $\functor(\X_{1},\ldots,\X_{a})$, where each $\X_{i}$ is a first-order variable \cite{Poole2003}. 
Each first-order variable is associated with a population/type. 
\begin{figure}[htbp] 
 \centering
\resizebox{0.45\textwidth}{!}{
 \includegraphics[width=0.5\textwidth]{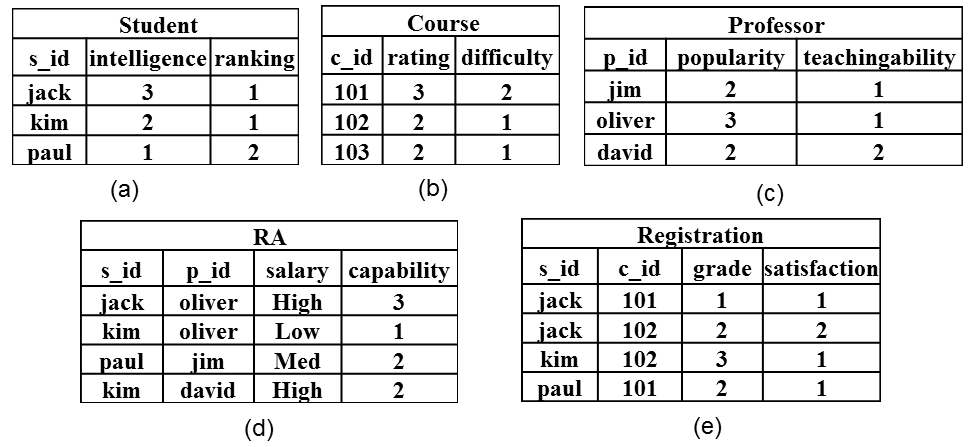} 
}
\caption{Database Instance based on Figure~\ref{fig:university-schema}.  
}
 \label{fig:university-tables}
\end{figure}

The functor formalism is rich enough to represent the constraints of an entity-relationship schema via the following translation: Entity sets correspond to populations, descriptive attributes to functions, relationship tables to relationships, and foreign key constraints to type constraints on the arguments of relationship predicates. Table~\ref{table:translation} illustrates this translation, distinguishing attributes of entities ($\eatts$) and attributes of relationships ($\ratts$). 

\begin{table}[btp] \centering
\resizebox{0.5\textwidth}{!}{
\begin{tabular}[c]
{|l|c|l|l|}\hline
  \begin{tabular}{l}ER \\Diagram \end{tabular}&Type & Functor &Random Variable \\\hline
    \begin{tabular}{l}Relation \\Tables \end{tabular}&RVars &RA & $RA(\P,\S)$ \\\hline
   \begin{tabular}{l}Entity \\Attributes \end{tabular}&$\eatts$ & intelligence, ranking &\begin{tabular}{l} 
  $\{intelligence(\S), ranking(\S)\}$  \\$=\eatts(\S) $\end{tabular} \\\hline
  \begin{tabular}{l} Relationship \\Attributes \end{tabular}&$\ratts$ & capability, salary &\begin{tabular}{l}   $\{capability(\P,\S), salary(\P,\S)\} $ \\= $\ratts(RA(\P,\S))$\end{tabular}\\\hline
   
\end{tabular}
}
\caption{Translation from ER Diagram to 
Random Variables. 
 \label{table:translation}}
\end{table}

\subsection{Contingency Tables}
 
Sufficient statistics can be represented in {\em contingency tables} as follows \cite{Moore1998}. 

Consider a fixed list of  random variables.
A \textbf{query} is a set of $(variable = value)$ pairs where each value is of a valid type for the random variable. 
The \textbf{result set} of a query in a database $\D$ is the set of instantiations of the first-order variables such that the query evaluates as true in $\D$.
For example, in the database of Figure~\ref{fig:university-tables} the result set for the query 
$(\it{intelligence}(\S) = 2$, $\it{rank}(\S) = 1$, $\it{popularity}(\P) = 3$, $\it{teachingability}(\P) = 1$, $\ra(\P,\S) = T)$ is the singleton $\{\langle \it{kim}, \it{oliver}\rangle\}$. 
The \textbf{count} of a query is the cardinality of its result set. 

For every set of variables $\set{V} = \{\V_{1}$,$\ldots,\V_{n} \}$ there is a \textbf{contingency table} $\ct(\set{V})$. 
This is a table with a row for each of the possible assignments of values to the variables in $\set{V}$, and a special integer column called $\qcount$. 
The value of the $\qcount$ column in a row 
corresponding to $V_{1} = v_{1},\ldots,V_{n} = v_{n}$ records the count of the 
corresponding query. 
Figure~\ref{fig:ct} shows the contingency table for the university database. 
The value of a relationship attribute is undefined for entities that are not related.
Following \cite{Russell2010}, 
we indicate this by writing 
$\it{capability(\P,\S)} = n/a $ for a reserved constant $\it{n/a}$. 
The assertion $\it{capability(\P,\S)}$ = n/a is therefore equivalent to the assertion that $\ra(\P,\S) = \false$.
\begin{figure}[htbp]
\begin{center}
\resizebox{0.5\textwidth}{!}{
\includegraphics[width=0.5\textwidth]{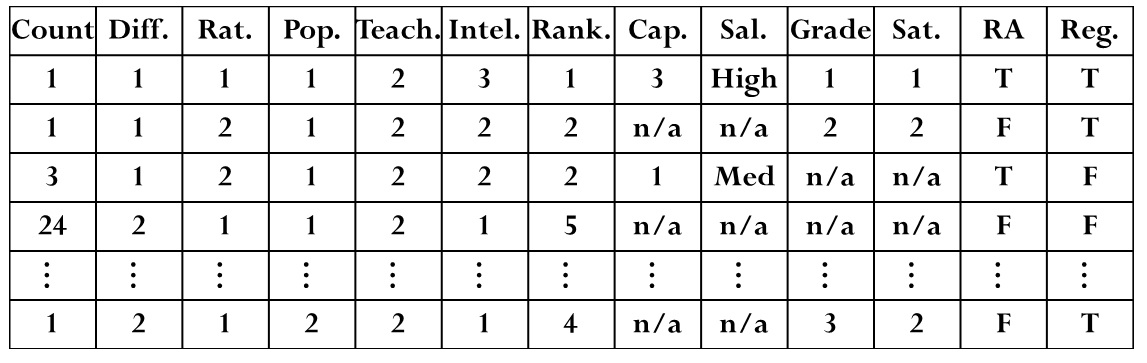}
}
\caption{Excerpt from the joint contingency table for the university database of Figure~\ref{fig:university-tables}. 
\label{fig:ct}}
\end{center}
\end{figure}
A \textbf{conditional contingency table}, written $$\ct(V_{1},\ldots,V_{k}|V_{k+1} = v_{k+1},\ldots, V_{k+m} = v_{k+m})$$
is the contingency table whose column headers are $V_{1},\ldots,V_{k}$ and whose rows comprise the subset that match the conditions to the right of the $\vert$ symbol.  
We assume that contingency tables omit rows with count 0.

\section{Relational  Contingency Tables}
Many relational learning algorithms take an iterative deepening approach: 
explore correlations along a single relationship, then along relationship chains of length 2, 3, etc. 
Chains of relationships form a natural lattice structure, where iterative deepening corresponds to moving from the bottom to the top. 
The M\"obius Join algorithm computes contingency tables by reusing the results for smaller relationships for larger relationship chains.

A relationship variable set is a \textbf{chain} if it can be ordered as a list $[\Relation_{1}(\argterms_{1}),\ldots,\Relation_{k}(\argterms_{k})]$ 
such that each relationship variable $\Relation_{i+1}(\argterms_{i+1})$ shares at least one first-order variable with the preceding terms $\Relation_{1}(\argterms_{1}),\ldots,\Relation_{i}(\argterms_{i})$.
All sets in the lattice are constrained to form a chain.
For instance, in the University schema of Figure~\ref{fig:university-schema}, a 
chain is formed by the two relationship variables
\[\reg(\S,\C),\ra(\P,\S).\]
If relationship variable $\it{Teaches}(\P,\C)$ is added,
we may have a three-element chain \[\reg(\S,\C),\ra(\P,\S),\it{Teaches}(\P,\C).\] 
The subset ordering defines a lattice on relationship sets/chains. 
Figure~\ref{fig:big-lattice} illustrates the  lattice for the relationship variables in the university schema. 
For reasons that we explain below, entity tables are also included in the lattice and linked to relationships that involve the entity in question. 
\begin{figure}[htbp]
\begin{center}
\resizebox{0.5\textwidth}{!}{
\includegraphics[width=0.8\textwidth]{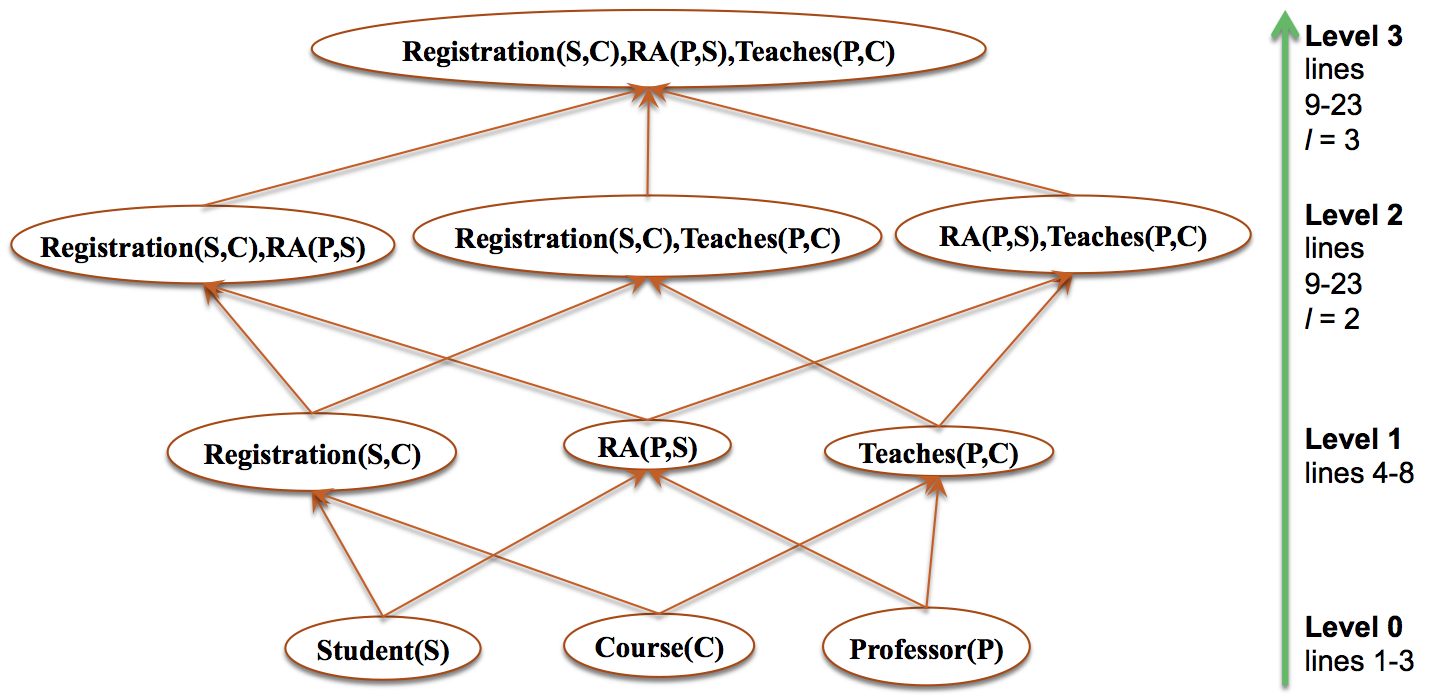}
}

\caption{A lattice of relationship sets for the university schema of Figure~\ref{fig:university-schema}. The M\"obius Join constructs contingency table tables for each relationship chain for each level $\ell$ of the lattice. We reference the lines of the pseudo-code in Algorithm~\ref{alg:fmt}.
\label{fig:big-lattice}}
\end{center}
\end{figure}
With each relationship chain $\set{\Relation}$ (Rchain for short) is associated a $\ct$-table $\ct_{\set{\Relation}}$. 
The variables in the $\ct$-table  $\ct_{\set{\Relation}}$ 
 comprise the relationship variables  in $\set{\Relation}$, and the unary/binary descriptive attributes associated with each of the relationships. To define these, we introduce the following notation (cf. Table~\ref{table:translation}).

\begin{itemize}
\item  $\eatts(\A)$ denotes the attribute variables of a first-order variable $\A$ collectively (1 for unary).
\item $\eatts(\set{R})$ denotes the set of entity attribute variables for the first-order variables that are involved in the relationships in $\set{R}$. 
\item $\ratts(\set{R})$ denotes the set of relationship attribute variables for 
the relationships in $\set{R}$ (2 for binary).
\item $\atts(\set{R}) \equiv \eatts(\set{R}) \cup \ratts(\set{R})$ is the set of all attribute variables in the relationship chain $\set{R}$.
\end{itemize}

In this notation, the variables in the $\ct$-table  $\ct_{\set{\Relation}}$  are denoted as $\set{\Relation} \cup \atts({\set{R}})$. 
The goal of the M\"obius Join algorithm is to compute a contingency table for each chain $\set{\Relation}$. 
In the example of Figure~\ref{fig:big-lattice}, the algorithm computes 10 contingency tables. The $\ct$-table for the top element of the lattice is the \textbf{joint $\ct$-table} for the entire database. 

If a conjunctive query involves only positive relationships, then it can be computed using SQL's count aggregate function applied to a table join. To illustrate, we show the SQL for computing the positive relationship part of the $\ct$-table for the $\ra(\P,\S)$ chain.

\begin{quote}
CREATE TABLE $\ct_{T}$  AS 
\\SELECT Count(*) as  count,  student.ranking, \\student.intelligence, professor.popularity,\\ professor.teachingability, RA.capability, RA.salary  \\
FROM professor, student, RA  \\
WHERE  \\RA.p\_id = professor.p\_id and RA.s\_id = student.s\_id  \\
GROUP BY student.ranking,  student.intelligence, professor.popularity,  professor.teachingability, RA.capability,  RA.salary
\end{quote}

Even more efficient than SQL count queries is the Tuple ID propagation method, a M\"obius Join method for computing query counts with positive relationships only \cite{Yin2004}. 
In the next section we assume that contingency tables for positive relationships only have been computed already, and consider how such tables can be extended to full contingency tables with both positive and negative relationships.

\section{Computing Contingency Tables For Negative Relationships} 

We describe a Virtual Join algorithm that computes the required sufficient statistics without  materializing a cross product of entity sets. 
First, we introduce an  extension of relational algebra that we term \textbf{contingency table algebra}. The purpose of this extension is to 
show that query counts using $k+1$ negative relationships can be computed from two query counts that each involve at most $k$ relationships. 
Second, a dynamic programming algorithm applies the algebraic identify repeatedly to build up a complete contingency table from partial tables.

\subsection{Contingency Table Algebra} \label{sec:cta}
We introduce relational algebra style operations defined on contingency tables.

\subsubsection{Unary Operators} \label{sec:unary}
\begin{description}
\item[Selection] $\sigma_{\selectcond}  \ct$ selects a subset of the rows in the  $\ct$-table  that satisfy condition $\selectcond$. This is the standard relational algebra operation except that the selection condition $\selectcond$ may not involve the $\qcount$ column.
\item[Projection]  
$\project_{\V_{1},\ldots,\V_{k}} \ct$ selects a subset of the  columns in the  $\ct$-table, excluding the count column. 
The counts in the projected subtable are the sum of counts of rows that satisfy the query in the subtable. 
The  $\ct$-table projection  $\project_{\V_{1},\ldots,\V_{k}} \ct$ can be defined by the following SQL code template:
\begin{quote}
SELECT SUM(count) AS count, $V_{1}, \ldots,\ V_{k}$ \\
FROM $\ct$ \\
GROUP BY $V_{1}, \ldots,\ V_{k}$
\end{quote}

\item[Conditioning]  $\condition_{\selectcond}  \ct$ returns a conditional contingency table. Ordering the columns as $(V_{1},\ldots,V_{k}, \ldots,\V_{k+j}$),  suppose that the selection condition is a conjunction of values of the form $$\selectcond = (V_{k+1} = v_{k+1},\ldots, V_{k+j} = v_{k+j}).$$  Conditioning can be defined in terms of selection and projection by the equation:
\begin{equation}
\condition_{\selectcond}  \ct = \project_{\V_{1},\ldots,\V_{k}} (\select_{\selectcond}  \ct) \nonumber
\end{equation}
\end{description}

\subsubsection{Binary Operators} \label{sec:bin}
We use $\set{V}$, $\set{U}$ in SQL templates to denote a list of column names in arbitrary order. The notation $\ct_{1}.\set{V} = \ct_{2}.\set{V}$ indicates an equijoin condition: the contingency tables $\ct_{1}$ and $\ct_{2}$ have the same column set $\set{V}$ and matching columns from the different tables have the same values.
\begin{description}
\item[Cross Product] 
The \textbf{cross-product} of $\ct_{1}(\set{U}),\ct_{2}(\set{V})$ is the Cartesian product of the rows, where the product counts are the products of count. The cross-product can be defined by the following SQL template:

\begin{quote}
SELECT \\($\ct_{1}.\qcount *\ct_{2}.\qcount$) AS $\qcount$,  $\set{U}$, $\set{V}$\\
FROM  $\ct_{1},\ct_{2}$
\end{quote}

\item[Addition] 
 The \textbf{count addition} $\ct_{1}(\set{V}) + \ct_{2}(\set{V})$ adds the counts of matching rows, as in the following SQL template.
\begin{quote}
SELECT 
$\ct_{1}.\qcount$+$\ct_{2}.\qcount$ AS $\qcount$, $\set{V}$ \\
FROM  $\ct_{1},\ct_{2}$\\
WHERE $\ct_{1}.\set{V} = \ct_{2}.\set{V}$
\end{quote}

If a row appears in one $\ct$-table but not the other, we include the row with the count of the table that contains the row. 

\item[Subtraction] %
The \textbf{count difference} $\ct_{1}(\set{V}) - \ct_{2}(\set{V})$ equals $\ct_{1}(\set{V}) + (- \ct_{2}(\set{V}))$ where $- \ct_{2}(\set{V})$ is the same as $\ct_{2}(\set{V})$ but with negative counts. 
Table subtraction is defined only if (i) without the $\qcount$ column, the rows in $\ct_{1}$ are a superset of those in $\ct_{2}$, and (ii) for each row that appears in both tables, the count in $\ct_{1}$ is at least as great as the count in $\ct_{2}$.
\end{description}

\subsubsection{Implementation}\label{sec:imp}

The selection operator can be implemented  using SQL as with standard relational algebra. 
Projection with $\ct$-tables requires use of the GROUP BY construct as shown in Section~\ref{sec:unary}. 

For addition/subtraction, assuming that a sort-merge join is used \cite{Ullman1982}, a standard analysis shows that the cost of a sort-merge join is $\it{size}(table1) + \it{size}(table2) +$ the cost of sorting both tables. 

The cross product is easily implemented in SQL as shown in Section~\ref{sec:bin}. The cross product size is quadratic in the size of the input tables.

\subsection{Lattice Computation of Contingency Tables} \label{sec:mobius}
\begin{figure*}[tb]
\begin{center}
\resizebox{0.8\textwidth}{!}{
\includegraphics[width=0.9\textwidth]{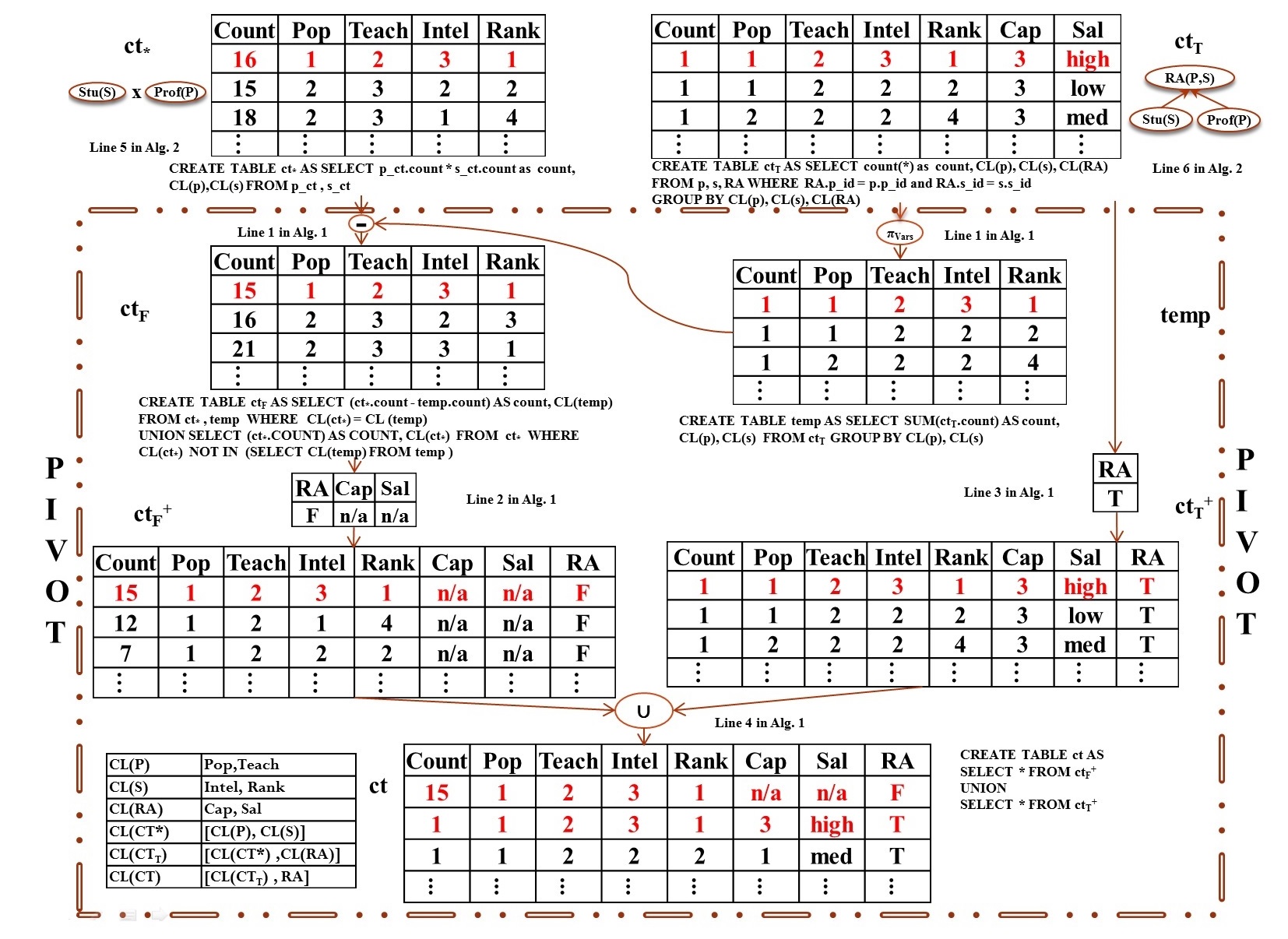}
}

\caption{Top: Equation~\eqref{eq:update} is used to compute the conditional contingency table $\ct_{\false} = \ct(\eatts(R)|R = F)$. (Set $\Nodes = \emptyset$, $R = \ra(\P,\S)$, $\set{R} = \emptyset$). Bottom: 
The Pivot operation computes the contingency table $\ct_{\ra(\P,\S)}$ for the relationship $\ra(\P,\S) := \R_{pivot}$. The $\ct$-table operations are implemented using dynamic SQL queries as shown. Lists of column names are abbreviated as shown and also as follows.
$\ColumnList({\ct_{*}}) = \ColumnList({temp})=\ColumnList({\ct_{F}})$, 
$\ColumnList({\ct}) =  \ColumnList(\ct_{\false}^{+})  = \ColumnList(\ct_{\true}^{+}) $. We reference the corresponding lines of Algorithms~\ref{Pivot_CT_Computation} and~\ref{alg:fmt}.
\label{fig:flow}}
\end{center}
\end{figure*}

This section describes a method for computing the contingency tables level-wise in the relationship chain lattice. We start with a contingency table algebra equivalence that allows us to compute counts for rows with negative relationships from rows with positive relations.
Following \cite{Moore1998}, we use a ``don't care" value $*$ to indicate that a query does not specify the value of a node. For instance, the query $\Relation_{1} = \true, \Relation_{2} = *$ is equivalent to the query $\Relation_{1} = \true$. 

%
\begin{algorithm}[htbp]
\label{Pivot_CT_Computation}
\SetKwData{KwCalls}{Calls}
\SetKwData{KwCondition}{Precondition:}
\KwIn{Two conditional contingency tables   $\ct_{\true} :=\ct(\Nodes,\it{\ratts}(R_{\it{pivot}})|R_{\it{pivot}}=\true$$,\set{R}=\true)$ and  $\ct_{*} :=\ct(\Nodes|R_{\it{pivot}} = *$$,\set{R}=\true)$ .}
\KwCondition  
 The set $\Nodes$ does not contain the relationship variable $R_{\it{pivot}}$ nor any of its descriptive attributes $\ratts(R_{\it{pivot}}$).\;
\KwOut{The conditional contingency table $ \ct(\Nodes,\it{\ratts}(R_{\it{pivot}}),R_{\it{pivot}}|$$\set{R}=\true)$ .}
\begin{algorithmic}[1]
\STATE $\ct_{\false} := \ct_{*} - \pi_{\Nodes}\ct_{\true}$.

\COMMENT{Implements the algebra Equation~\ref{eq:update} in proposition~\ref{PivotCT}.}
\STATE $\ct_{\false}^{+}$ := extend  $\ct_{\false}$ with columns $R_{\it{pivot}}$ everywhere false and $\it{\ratts}(R_{\it{pivot}})$ everywhere $n/a$.
\STATE $\ct_{\true}^{+}$ := extend  $\ct_{\true}$ with columns $R_{\it{pivot}}$ everywhere true.
\STATE \Return $\ct_{\false}^{+} \cup \ct_{\true}^{+}$
\end{algorithmic}
\label{alg:pivot}
\caption{The Pivot function returns a conditional contingency table for a set of attribute variables and all possible values of the relationship $R_{\it{pivot}}$, including $R_{\it{pivot}} = \false$. 
 The set of conditional relationships $\set{R} =(\R_{pivot},\ldots,\R_{\ell})$ 
 may be empty in  which case the Pivot computes an unconditional ct-table. }
\end{algorithm}

\begin{proposition}
\label{PivotCT}
Let $R$ be a relationship variable and let $\set{R}$ be a set of relationship variables. Let $\Nodes$ be a set of variables that 
does not contain $\R$ nor any of the $\ratts$ of $\R$. Let  $\X_{1},\ldots, \X_{l}$ be the first-order variables that appear in $\R$ but not in $\Nodes$, where ${l}$ is possibly zero. Then we have
\begin{flalign}
\label{eq:update}
&\ct(\Nodes \cup \eatts(R)|\set{R} = \true, R = F) = & \\ 
& \ct(\Nodes|\set{R} = \true, R =*) \times \ct(\X_{1}) \times \cdots \times \ct(\X_{l}) \nonumber & \\
& -\ct(\Nodes  \cup \eatts(R)|\set{R} = \true, R = T). \nonumber&
\end{flalign}
If $l = 0$, the equation holds without  the 
cross-product term.
\end{proposition}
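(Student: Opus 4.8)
The plan is to prove the identity rowwise. Since a contingency table with its \qcount\ column represents a function from header value-assignments to nonnegative counts (with absent rows read as count $0$), it suffices to fix an arbitrary assignment $\set{v}$ to the header variables $\Nodes \cup \eatts(R)$ of the left-hand side and show that the \qcount\ entries of both sides of Equation~\eqref{eq:update} agree on $\set{v}$. Writing $\set{v}_N$ and $\set{v}_E$ for the restrictions of $\set{v}$ to $\Nodes$ and to $\eatts(R)$ respectively, I would reduce the whole claim to two elementary counting facts about result sets of conjunctive queries, handled in the two steps below.

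First I would establish the \emph{complementation} step. For the fixed header assignment, the groundings $\grounding$ satisfying the query $(\set{R}=\true, R=*)$ split into the disjoint union of those also satisfying $R=\true$ and those also satisfying $R=\false$, because under the ``don't care'' convention $R=*$ imposes no constraint on $R$, while $R$ is Boolean. Taking cardinalities gives, for each assignment to $\Nodes\cup\eatts(R)$, that the count under $R=*$ equals the sum of the counts under $R=\true$ and under $R=\false$; equivalently $\ct(\Nodes\cup\eatts(R)\mid\set{R}=\true,R=\false) = \ct(\Nodes\cup\eatts(R)\mid\set{R}=\true,R=*) - \ct(\Nodes\cup\eatts(R)\mid\set{R}=\true,R=\true)$. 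The second summand is already the final term of Equation~\eqref{eq:update}, so it remains only to rewrite the ``don't care'' term over the smaller header $\Nodes$.

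Second I would establish the \emph{factorization} step, the combinatorial heart of the argument. The first-order variables $X_1,\ldots,X_l$ occur in $R$ but not in $\Nodes$, so once $R=*$ removes the only constraint binding them their attribute values range independently of the groundings counted by $\ct(\Nodes\mid\set{R}=\true,R=*)$. Concretely, a grounding realizing $\Nodes=\set{v}_N$ together with the attribute assignment $\set{v}_E$ on $\eatts(R)$ is given by independently choosing (i) a grounding of the $\Nodes$-variables realizing $\set{v}_N$ under $\set{R}=\true$, and (ii) for each $i$, an individual in the population of $X_i$ realizing the $\eatts(X_i)$-part of $\set{v}_E$; the product rule then yields that the $\set{v}$-entry of $\ct(\Nodes\cup\eatts(R)\mid\set{R}=\true,R=*)$ equals the $\set{v}_N$-entry of $\ct(\Nodes\mid\set{R}=\true,R=*)$ times $\prod_{i=1}^{l}\ct(X_i)(\set{v}_E^{(i)})$, which is exactly the $\set{v}$-entry of the cross-product $\ct(\Nodes\mid\set{R}=\true,R=*)\times\ct(X_1)\times\cdots\times\ct(X_l)$ by the definition of the cross-product operator. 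Substituting this factorization into the complementation identity gives Equation~\eqref{eq:update}; when $l=0$ every first-order variable of $R$ already appears in $\Nodes$, there is nothing to free, the product is empty, and the cross-product term drops out.

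The step I expect to be the main obstacle is this factorization, and specifically the bookkeeping required to make it exact. One must verify that the attributes $\eatts(X_i)$ of the freed variables are genuinely unconstrained by the remainder of the query --- that $X_i$ appears neither in $\Nodes$ nor, in the chain/pivot setting of Algorithm~\ref{alg:pivot}, in $\set{R}$ --- so that the Cartesian factorization is an equality rather than merely an upper bound; and one must ensure that any attribute in $\eatts(R)$ belonging to a first-order variable shared with $\Nodes$ is already pinned by $\set{v}_N$ and is therefore not recounted by the $\ct(X_i)$ factors. Pinning down the convention that ``$R=*$ imposes no constraint'' precisely at the level of result sets is what simultaneously licenses the disjointness used in the complementation step and the independence used in the factorization step.
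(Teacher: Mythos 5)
Your proposal is correct and follows essentially the same route as the paper's own proof: the same two-step decomposition into a complementation identity (the $R=*$ count is the sum of the $R=\true$ and $R=\false$ counts, hence the subtraction) followed by a factorization of the don't-care table into a cross product over the first-order variables of $R$ not in $\Nodes$. Your rowwise framing and the explicit bookkeeping about which attributes are pinned versus freed merely make precise what the paper states more tersely (including its footnoted assumption that every first-order variable carries at least one descriptive attribute).
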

\begin{proof}
The equation 
\begin{align}
&\ct(\Nodes  \cup \eatts(R)|\set{R} = \true, R = *) = &\label{eq:update2}  \\ 
&\ct(\Nodes  \cup \eatts(R)|\set{R} = \true, R = T)  + & \nonumber \\ 
&\ct(\Nodes  \cup \eatts(R)|\set{R} = \true, R = F) & \nonumber
\end{align}
holds because the set $\Nodes \cup \eatts(R)$ contains all first-order variables in $R$.\footnote{We assume here that  for each first-order variable, there is at least one $\eatt$, i.e., descriptive attribute.} 
%
 Equation~\eqref{eq:update2} implies
\begin{align} 
&\ct(\Nodes  \cup \eatts(R)|\set{R} = \true, R = \false) =& \label{eq:table-subtract} \\ 
&\ct(\Nodes  \cup \eatts(R)|\set{R} = \true, R = *) -&\nonumber  \\
 & \ct(\Nodes  \cup \eatts(R)|\set{R} = \true, R = \true).& \nonumber
\end{align}

To compute the $\ct$-table conditional on the relationship $\R$ being unspecified, we use the equation
\begin{align}
&\ct(\Nodes  \cup \eatts(R)|\set{R} = \true, R = *) =  &\label{eq:table-multiply}\\
&\ct(\Nodes|\set{R} = \true, R =*) \times \ct(\X_{1}) \times \cdots \times \ct(\X_{l})& \nonumber
\end{align}
which holds because if the set $\Nodes$ does not contain a first-order variable of $\R$, then the counts of the associated $\eatts(\R)$ are independent of the counts for $\Nodes$. 
If $l = 0$, there is no new first-order variable, and Equation~\eqref{eq:table-multiply} holds without  the cross-product term.
Together Equations~\eqref{eq:table-subtract} and~\eqref{eq:table-multiply} establish the proposition.
\end{proof}


Figure~\ref{fig:flow} illustrates Equation~\eqref{eq:update}. 
The construction of the $\ct_{\false}$ table in 
Algorithm~\ref{Pivot_CT_Computation} provides pseudo-code for applying Equation~\eqref{eq:update} to compute a complete $\ct$-table, given a partial table where a specified relationship variable $\Relation$  is true,
and another partial table that does not contain the relationship variable. 
We refer to $\Relation$ as the \textbf{pivot} variable. 
For extra generality, Algorithm~\ref{Pivot_CT_Computation} applies Equation~\eqref{eq:update} with a condition that lists a set of relationship variables fixed to be true.  Figure~\ref{fig:flow} illustrates the  Pivot computation for the case of only one relationship. 
Algorithm~\ref{level-wise-subtract} shows how the Pivot operation can be applied repeatedly to find all contingency tables in the relationship lattice. 

\begin{algorithm*}[tb]
\label{level-wise-subtract}
\SetKwData{KwCalls}{Calls}
\SetKwData{Notation}{Notation}
\KwIn{A relational database $\D$; a set of  variables}
\KwOut{A contingency table that lists the count in the database $D$ for each possible assignment of values to each variable.}
\begin{algorithmic}[1]
\FORALL{first-order variables $\X$}
\STATE compute $\ct(\eatts(\X))$ using SQL queries.
\ENDFOR
\FORALL{relationship variable $\R$}
\STATE $\ct_{*} := \ct(\X) \times \ct(\Y)$ where $\X$,$\Y$ are the first-order variables in $\R$.
\STATE $\ct_{\true} := \ct(\eatts(\R)|\R = \true)$ using SQL joins.
\STATE Call  $\it{Pivot}(\ct_{\true},\ct_{*})$ to compute $\ct(\eatts(\R),\ratts(\R),\R)$.
\ENDFOR
\FOR{Rchain length $\ell=2$ to $m$}
\FORALL{Rchains $\set{\R} = R_{1},\ldots,\R_{\ell}$}
\STATE $Current\_\ct :=  \ct(\eatts(\R_{1},\ldots,\R_{\ell}),\ratts(\R_{1},\ldots,\R_{\ell})|\R_{1}=\true,\ldots,\R_{\ell}=\true)$ using SQL joins.
\FOR{$i=1$ to $\ell$} \label {reﬂine:innerloop}
\IF{ $i$ equals  1}
\STATE $\ct_{*} := \ct(\eatts(\R_{2},\ldots,\R_{\ell}),\ratts(\R_{2},\ldots,\R_{\ell})|
\R_{1}=*,\R_{2} = \true,\ldots,\R_{\ell}=\true) \times \ct(\X)$ where $\X$ is the first-order variable in $\R_{1}$, if any, that does not appear in $\R_{2},\ldots,\R_{\ell}$
\COMMENT{$\ct_{*}$ can be computed from a $\ct$-table for a Rchain of length $\ell-1$.}
\ELSE
\STATE $\eatts_{\bar{i}} := \eatts(\R_{1},\ldots,\R_{i-1},\R_{i+1},\ldots,\R_{\ell})$.
\STATE $\ratts_{\bar{i}} := \ratts(\R_{1},\ldots,\R_{i-1},\R_{i+1},\ldots,\R_{\ell})$.
\STATE $\ct_{*} := \ct(\eatts_{\bar{i}}, \ratts_{\bar{i}},\R_{1},\ldots,\R_{i-1})|
\R_{i}=*,\R_{i+1} = \true,\ldots,\R_{\ell}=\true) \times \ct(\Y)$ where $\Y$ is the first-order variable in $\R_{i}$, if any, that does not appear in $\set{\R}$. 
\ENDIF \\
\STATE $Current\_\ct :=  \it{Pivot}(Current\_\ct,\ct_{*})$.
\ENDFOR 
\COMMENT{Loop Invariant: After  iteration $i$, the table $Current\_\ct$ equals 
$\ct(\eatts(\R_{1},\ldots,\R_{\ell}), \ratts(\R_{1},\ldots,\R_{\ell}),\R_{1},\ldots,\R_{i}|\R_{i+1} = \true,\ldots,\R_{\ell}=\true)$}
\ENDFOR
\COMMENT{Loop Invariant: The $\ct$-tables for all Rchains of length $\ell$ have been computed.}
\ENDFOR 
\STATE \Return the $\ct$-table for the Rchain involves all the relationship variables.
\end{algorithmic}
\label{alg:fmt}
\caption{M\"obius Join algorithm for Computing the Contingency Table for Input Database}
\end{algorithm*}

{\em Initialization.} Compute $\ct$-tables for entity tables.
Compute $\ct$-tables for each single relationship variable $\Relation$ , conditional on $\Relation = \true$. 
If $\Relation = \ast$, then no link is specified between the first-order variables involved in the relation $\Relation$. Therefore the individual counts for each first-order variable are independent of each other and the joint counts can be obtained by the cross product operation. 
Apply the Pivot function to construct the  complete $\ct$-table for relationship variable $\Relation$. 

{\em Lattice Computation.} The goal is to compute $\ct$-tables for all relationship chains of length $>1$. For each relationship chain, order the relationship variables in the chain arbitrarily. Make each relationship variable in order the Pivot variable $\Relation_{i}$. For the current Pivot variable $\Relation_{i}$, find the conditional $\ct$-table where $\Relation_{i}$ is unspecified, and the subsequent relations $\Relation_{j}$ with $j>i$ are true. This $\ct$-table can be computed from a $ct$-table for a shorter chain that has been constructed already. The conditional $ct$-table   has been constructed already, where $\Relation_{i}$ is true, and the subsequent relations are true (see loop invariant). Apply the Pivot function to construct the  complete $\ct$-table, for any Pivot variable $\Relation_{i}$,  conditional on the subsequent relations being true. 
\begin{figure}[htbp]
\begin{center}
\resizebox{0.5\textwidth}{!}{
\includegraphics{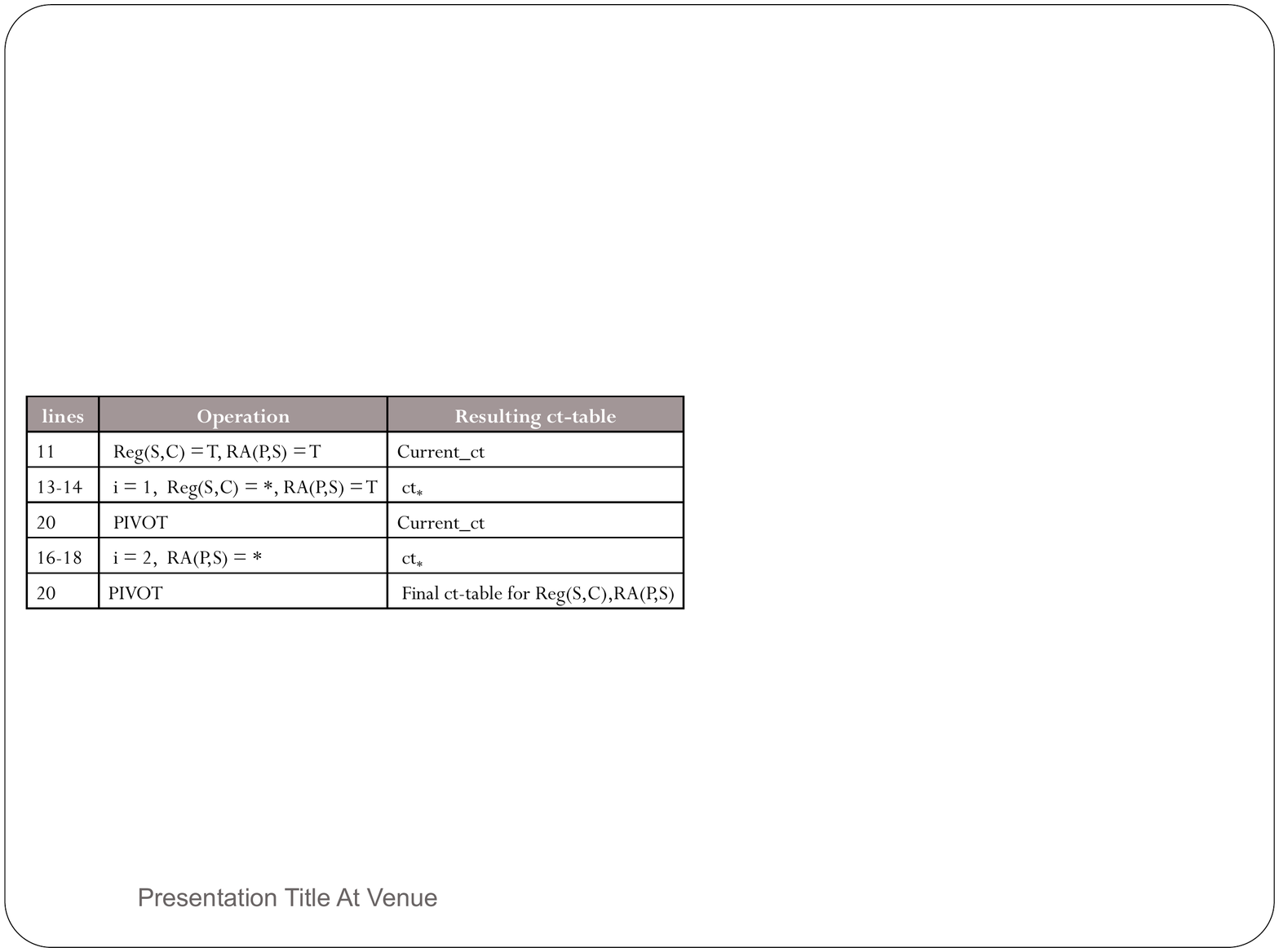}}
\caption{Illustrates the relationship chain loop of Algorithm~\ref{alg:fmt} (lines 11-21) for the chain $\set{\R}= \it{Reg}(\S,\C),\ra(\P,\S)$. This loop is executed for each relationship chain at each level.
\label{fig:rchain-loop}}
\end{center}
\end{figure}

\subsection{Complexity Analysis} 
\label{sec:complexity} 

The key point about the M\"obius Join (\MJ) algorithm 
is that it avoids materializing the cross product of entity tuples. {\em The algorithm accesses  only \textbf{existing} tuples, never constructs nonexisting tuples.} The number of $\ct$-table operation is therefore independent of the number of data records in the original database. We bound the total number of $\ct$-algebra operations performed by the M\"obius Join algorithm in terms of the size of its output: the number of sufficient statistics that involve negative relationships. 
\begin{proposition}
The number of $\ct$-table operations performed by the M\"obius Join algorithm is bounded as $$\it{\# \ct\_\it{ops}} = O(r \cdot \log_{2} r)$$ where $\row$ is the number of sufficient statistics that involve negative relationships.
\end{proposition}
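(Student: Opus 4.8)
The plan is to charge the total number of contingency-table operations to the size of the output in two stages: first I would bound the operation count purely by the combinatorics of the relationship lattice, and then I would convert that schema-level bound into a bound in $r$, the only place where the data enters.

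First I would observe that each call to the Pivot function of Algorithm~\ref{alg:pivot} performs a constant number of $\ct$-algebra operations: one subtraction ($\ct_{\false} := \ct_{*} - \pi_{\Nodes}\ct_{\true}$), two column-extensions, and one union. Hence, up to a constant factor, the total number of $\ct$-operations equals the total number of Pivot invocations, together with lower-order initialisation work (the entity $\ct$-tables, the cross products, and one base SQL join per Rchain). Inspecting Algorithm~\ref{alg:fmt}, each Rchain $\set{R}$ of length $\ell$ triggers exactly $\ell$ Pivot calls via the inner loop on lines 11--21, while each single-relationship chain triggers one. Therefore the number of $\ct$-operations is $O\!\big(\sum_{\set{R}} |\set{R}|\big)$, where the sum ranges over all Rchains in the lattice and the initialisation terms are dominated.

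Next I would bound this sum combinatorially and then in terms of $r$. Let $m$ be the number of relationship variables, so the longest chain has length $m$ and every chain is a subset of these $m$ variables; consequently the number $C$ of chains satisfies $C \le 2^{m}$, and $\sum_{\set{R}} |\set{R}| \le m\,C \le m\,2^{m}$. To translate $m$ and $C$ into the output size, I would use that the joint $\ct$-table contains a distinct row for every truth assignment to the $m$ relationship variables that sets at least one variable to false, i.e. at least $2^{m}-1$ rows involving negative relationships. This gives $r \ge 2^{m}-1$, hence simultaneously $m \le \log_2(r+1) = O(\log_2 r)$ and $C \le 2^{m} \le r+1 = O(r)$. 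Substituting, the number of $\ct$-operations is $O(m\,C) = O(r \log_2 r)$, as claimed.

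The step I expect to be the main obstacle is the inequality $r \ge 2^{m}-1$, since it is what produces the logarithmic rather than quadratic factor and it is the only point at which data enters. Making it rigorous requires the realisability assumption that each of the $2^{m}-1$ negative configurations is witnessed by at least one tuple and so appears as a nonzero-count row (recall that $\ct$-tables omit zero counts); under this assumption the two bounds $m = O(\log_2 r)$ and $C = O(r)$ both hold and the argument closes. I would state this assumption explicitly and note that it holds whenever the relationships are sparse enough that all presence/absence patterns occur, which is exactly the regime of interest. I would also verify the constant per Pivot call and confirm that the entity-table and per-chain SQL-join operations are dominated by $\sum_{\set{R}} |\set{R}|$, so that they do not affect the asymptotics.
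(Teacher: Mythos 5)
Your proof is correct and follows essentially the same route as the paper's: a constant number of $\ct$-algebra operations per Pivot call gives the schema-level bound $O(m \cdot 2^{m})$ (you use the crude bound $m\,C \le m\,2^{m}$ where the paper uses the exact identity $\sum_{\ell}\binom{m}{\ell}\ell = m\,2^{m-1}$, which is the same asymptotically), and this is converted to $O(r\log_2 r)$ via the observation that each of the $2^{m}-1$ relationship-value configurations with at least one false value contributes at least one output row, so $r \ge 2^{m}-1$. The realisability assumption you flag is also implicit in the paper (it writes $r = d\cdot 2^{m}$, respectively $r = s(2^{m}-1)$, with $d,s \ge 1$ rows per relationship configuration), so your version is, if anything, slightly more explicit on that point.
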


To  analyze the computational cost, we examine the total number of $\ct$-algebra operations performed by the  M\"obius Join algorithm. 
%
We provide  upper bounds in terms of two parameters: the number of relationship nodes  $m$, and  the number of rows $\row$  in the $\ct$-table that involve negative relationships.
For these parameters we establish that
$$\it{\# \ct\_\it{ops}} = O(r \cdot \log_{2} r) = O(m \cdot 2^{m}) .$$
This shows the efficiency of our algorithm for the following reasons.
(i) Since the time cost of any algorithm must be at least as great as the time for writing the output, which is as least as great as $\row$, 
 the  M\"obius Join algorithm adds at most a logarithmic factor to this lower bound. 
(ii)  The second upper bound means that the number of $\ct$-algebra operations is fixed-parameter tractable with respect to $m$.\footnote{For arbitrary $m$, the problem of computing a $\ct$ table in a relational structure is \#P-complete \cite[Prop.12.4]{Domingos2007}.} In practice the number $m$ is on the order of the number of tables in the database, 
which is very small compared to the number of tuples in the tables.

\emph{Derivation of Upper Bounds.}
For a given relationship chain of length $\ell$,  the  M\"obius Join algorithm goes through the chain linearly (Algorithm~\ref{alg:fmt} 
inner for loop line~\ref{reﬂine:innerloop}
). 
At each iteration, it computes a $\ct_{*}$ table with a single cross product, then performs a single Pivot operation.
 Each Pivot operation requires three  $\ct$-algebra operations. 
Thus overall, the number of  $\ct$-algebra operations for a relationship chain of length $\ell$ is $6 \cdot \ell = O(\ell)$. For a fixed length $\ell$, there are at most $\binom{m}{\ell}$ relationship chains. Using the known identity\footnote{math.wikia.com/wiki/Binomial\_coefficient, Equation 6a}
\begin{align} 
\sum_{\ell=1}^{m} {m\choose \ell} \cdot \ell = m \cdot  2^{m-1} \label{eq:upperbound}
\end{align}
we obtain the $O(m \cdot  2^{m-1}) = O(m \cdot  2^{m})$ upper bound.

For the upper bound in terms of $\ct$-table rows $\row$, we note that the output $\ct$-table can be decomposed into $2^{m}$ subtables, one for each assignment of values to the $m$ relationship nodes. 
Each of these subtables contains the same number of rows $d$ , one for each possible assignment of values to the attribute nodes. 
Thus the total number of rows is given by $r = d \cdot 2^m.$ 
Therefore we have 
$m \cdot 2^{m} = \log_{2} (r/d) \cdot r/d \leq \log_{2}(r) \cdot r.$
Thus the total number of $\ct$-algebra operations is $O(r \cdot \log_{2}(r))$.

From this analysis we see that both upper bounds are overestimates. (1) Because relationship chains must be linked by foreign key constraints, the number of valid relationship chains of length $\ell$ is usually much smaller than the number of all possible subsets ${m\choose \ell}$. (2) The constant factor $d$ grows exponentially with the number of attribute nodes, so $\log_{2}(r) \cdot r$ is a loose upper bound on $\log_{2} (r/d) \cdot r/d$. 
We conclude that the number of $\ct$-algebra operations is not the critical factor for scalability, but rather the cost of 
carrying out a single $\ct$-algebra operation.

This means that
if the number $\row$ of sufficient statistics is a feasible bound on computational time and space, then computing the sufficient statistics is feasible. In our benchmark datasets, the number of sufficient statistics was feasible, as we report below. 
In Section~\ref{sec:conclusion} below we discuss options in case the number of sufficient statistics  grows too large.
%
%
%

\section{Evaluation of Contingency Table Computation} 
We describe the system and the datasets we used.
Code was written in Java, JRE 1.7.0.  and executed with 8GB of RAM and a single Intel Core 2 QUAD Processor Q6700 with a clock speed of 2.66GHz (no hyper-threading). The operating system was Linux Centos 2.6.32. 
The MySQL Server version 5.5.34 was run with 8GB of RAM and a single core processor of 2.2GHz. 
All code and datasets are available on-line~\cite{bib:jbnsite}.

\begin{table}[hbtp] \centering
\resizebox{0.5\textwidth}{!}{
\begin{tabular}[c]
{|l|c|c|r|c|}\hline
 \textbf{Dataset} & \textbf{\begin{tabular}[l] {ll} \#Relationship \\Tables/ Total \end {tabular}} & \textbf{\begin{tabular}[l] {ll} \#Self \\Relationships\end {tabular}}  & \textbf{\#Tuples} & \textbf{\#Attributes}  \\\hline
    Movielens &1 / 3 & 0  & 1,010,051 & 7\\\hline
    Mutagenesis & 2 / 4 & 0 & 14,540 & 11\\\hline
    Financial &3 / 7 & 0  &  225,932& 15\\\hline
   Hepatitis &3 / 7 & 0 &12,927  & 19\\\hline
   IMDB &3 / 7 & 0 &1,354,134  & 17\\\hline
    Mondial &2 / 4 & \textbf{1} &  870& 18\\\hline
    UW-CSE &2 / 4 & \textbf{2}  & 712 & 14\\\hline   
\end{tabular}
}
\caption{Datasets characteristics. \#Tuples = total number of tuples over all tables in the dataset. 
  \label{table:datasetsize}}
\end{table}

\subsection{Datasets}

We used seven benchmark real-world databases. For detailed descriptions and  the sources of the databases, please see reference~\cite{Schulte2012}. Table~\ref{table:datasetsize} summarizes basic information about the benchmark datasets.  A  self-relationship 
relates two entities of the same type (e.g. $\it{Borders}$ relates two countries in Mondial). Random variables for each database were defined as described in Section~\ref{sec:variables} (see also \cite{Schulte2012}). IMDB is the largest dataset in terms of number of total tuples (more than 1.3M tuples) and schema complexity. 
It combines the MovieLens database\footnote{www.grouplens.org, 1M version} with data from the Internet Movie Database (IMDB)\footnote{www.imdb.com, July 2013} following \cite{Peralta2007}.

\subsection{Contingency Tables With Negative Relationships: Cross Product vs. M\"obius Join}

In this subsection we compare two different approaches for constructing the joint contingency tables for all variables together, for each database: Our M\"obius Join algorithm (MJ) vs. materializing the cross product (CP) of the entity tables for each first-order variable (primary keys).
Cross-checking the MJ contingency tables with the cross-product contingency tables confirmed the correctness of our implementation. Table~\ref{table:cttimes} compares the time and space costs of the MJ vs. the CP approach. The cross product was materialized using an SQL query. 
The ratio of the cross product size to the number of statistics in the $\ct$-table measures how much compression the $\ct$-table provides compared to enumerating the cross product. 
It shows that cross product materialization  requires an infeasible amount of space resources.
The $\ct$-table provides a substantial compression of the statistical information in the database, by a factor of over 4,500 for the largest database IMDB.  

\begin{table}[htbp] \centering
\resizebox{0.5\textwidth}{!}{
\begin{tabular}{|l|r|r|r|r|r|}\hline 
 \textbf{Dataset} & \textbf{MJ-time}(s) & $\textbf{CP-time}(s)$& \textbf{CP-\#tuples}  & \textbf{\#Statistics} & \textbf{\begin{tabular}{l}Compress \\Ratio
 \end{tabular}} \\\hline
Movielens &2.70&703.99 &23M &252 &93,053.32\\\hline
Mutagenesis &1.67&1096.00 & 1M &1,631 &555.00  \\\hline
Financial &  1421.87&N.T. &149,046,585M &3,013,011 &49,467,653.90   \\\hline
Hepatitis &3536.76&N.T. &17,846M& 12,374,892 &1,442.19 \\\hline
IMDB &7467.85&N.T. &5,030,412,758M&15,538,430 & 323,740,092.05 \\\hline
Mondial &1112.84&132.13&5M&1,746,870&2.67  \\\hline
UW-CSE &3.84&350.30& 10M&2,828 & 3,607.32\\\hline

\end{tabular}
}
\caption{Constructing the contingency table for each dataset. 
M = million. N.T. = non-termination. Compress Ratio = CP-\#tuples/\#Statistics.
  \label{table:cttimes}}
\end{table}

{\em Computation Time.} The numbers shown are the complete computation time for all statistics. For faster processing, both methods used a B+tree index built on each column in the original dataset. The \MJ method also utilized B+ indexes on the $\ct$-tables. We include the cost of building these indexes in the reported time. 
%
%
The M\"obius Join algorithm returned a contingency table with negative relationships in feasible time. On the biggest dataset IMDB with 1.3 million tuples, it took just over 2 hours. 

The cross product construction did not always terminate, crashing after around 4, 5, and 10 hours on Financial, IMDB and Hepatitis respectively. When it did terminate, it took orders of magnitude longer than the \MJ ~method except for the Mondial dataset. Generally the higher the compression ratio, the higher the time savings. On Mondial the compression ratio is unusually low, so materializing the cross-product was faster. 
\begin{table}[htbp]
  \centering
\resizebox{0.5\textwidth}{!}{
\begin{tabular}{|l|r|r|r|r|}\hline
Dataset & \multicolumn{1}{r|}{Link On} & \multicolumn{1}{c|}{Link Off} & \multicolumn{1}{c|}{\#extra  statistics} & \multicolumn{1}{c|}{extra time (s)}    \\ \hline
MovieLens & 252   & 210   & 42    & 0.27    \\ \hline
Mutagenesis & 1,631 & 565   & 1,066 & 0.99    \\ \hline
Financial & 3,013,011 & 8,733 & 3,004,278 & 1416.21    \\ \hline
Hepatitis & 12,374,892 & 2,487 & 12,372,405 & 3535.51    \\ \hline
IMDB  & 15,538,430 & 1,098,132 & 14,440,298 & 4538.62    \\ \hline
Mondial & 1,746,870 & 0     & 1,746,870 & 1112.31    \\ \hline
UW-CSE & 2,828 & 2     & 2,826 & 3.41    \\ \hline
\end{tabular}%

}
  \caption{Number of Sufficient Statistics for Link Analysis On and Off. Extra Time refers to the total \MJ time (Table~\ref{table:cttimes} Col.2) minus the time for computing the positive statistics only.}
  \label{table:link-onoff}%
\end{table}%

\begin{figure}[htbp]
\begin{center}
\resizebox{0.4\textwidth}{!}{
\includegraphics[width=0.6\textwidth]{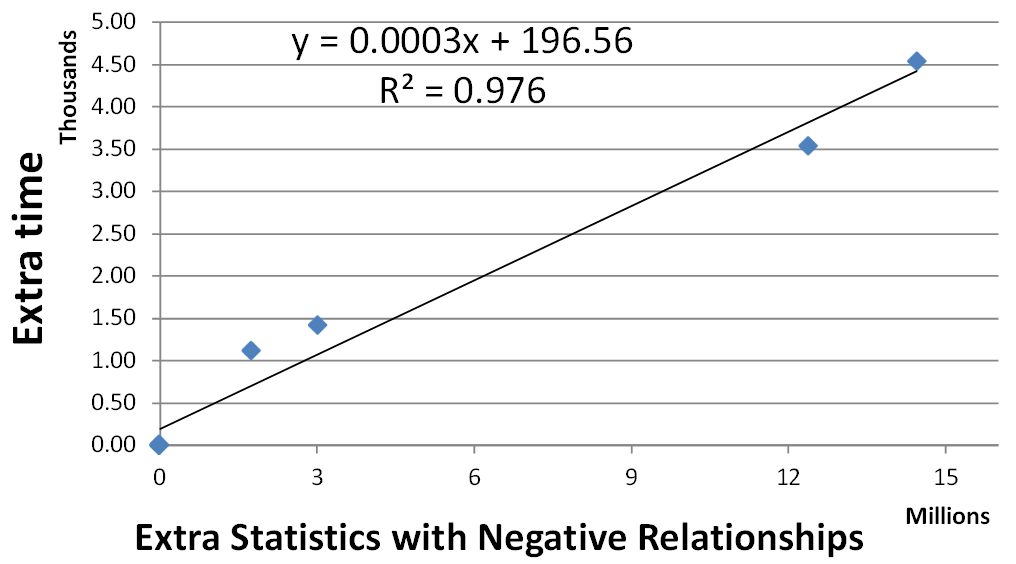}
}

\caption{M\"obius Join Extra Time (s)
\label{fig:runtime-vj}}
\end{center}
\end{figure}

\subsection{Contingency Tables with Negative Relationships vs. Positive Relationships Only} 
In this section we compare the time and space costs of computing both positive and negative relationships, vs. positive relationships only.
We use the following terminology. \textbf{Link Analysis On} refers to using a contingency table with sufficient statistics for both positive and negative relationships. 
An example is table $\ct$ in Figure~\ref{fig:flow}. 
\textbf{Link Analysis Off} refers to using a contingency table with sufficient statistics for positive relationships only. An example is table $\ct_{\true}^{+}$ 
 in Figure~\ref{fig:flow}. Table~\ref{table:link-onoff} shows the  number of sufficient statistics required for link analysis on vs. off. The difference between the link analysis on statistics  and the link analysis off statistics is the number of Extra Statistics.
The Extra Time column shows how much time the \MJ algorithm requires to compute the Extra Statistics {\em after} the contingency tables for positive relationships are constructed using SQL joins. As Figure~\ref{fig:runtime-vj} illustrates, the Extra Time stands in a nearly linear relationship to the number of Extra Statistics, which confirms the analysis of Section~\ref{sec:complexity}. Figure~\ref{fig:breakdown-vj} shows that most of the \MJ run time is spent on the Pivot component (Algorithm~\ref{alg:pivot}) rather than the main loop (Algorithm~\ref{alg:fmt}). In terms of $\ct$-table operations, most time is spent on subtraction/union rather than cross product.

\begin{figure}[htbp]
\begin{center}
\resizebox{0.4\textwidth}{!}{
\includegraphics[width=0.6\textwidth]{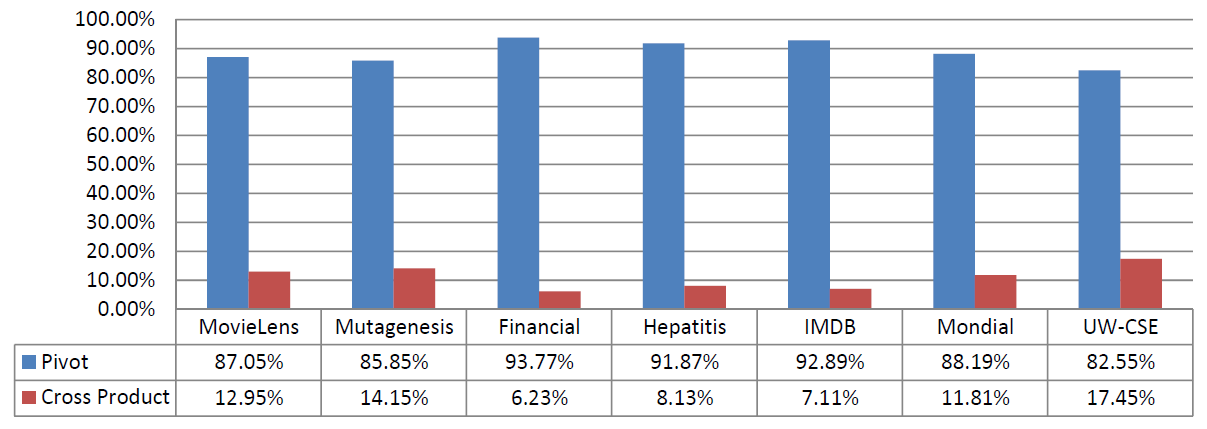}
}

\caption{Breakdown of \MJ Total Running Time
\label{fig:breakdown-vj}}
\end{center}
\end{figure}

\section{Statistical Applications}
We evaluate using link analysis on three different types of cross-table statistical analysis: feature selection, association rule  mining, and learning a Bayesian network.

\subsection{Feature Selection} For each database, we selected a target for classification, then used Weka's CFS feature subset selection method (Version 3.6.7) to select features for classification \cite{Hall2009}, given a contingency table. The idea is that if the existence of relationships is relevant to classification, then there should be a difference between the set selected with link analysis on and that selected with link analysis off. 
We measure how different two feature sets are by 1-Jaccard's coefficient:
$$\it{Distinctness}(A,B) = 1- \frac{A \cap B}{A \cup B}.$$

\begin{table}[htbp] \centering
\resizebox{0.5\textwidth}{!}{
\begin{tabular}{|l|R{2.2cm}|R{1.8cm}|R{2cm}|r|} \hline
{\multirow{2}[4]{*}{Dataset}} &{\multirow{2}[4]{*}{Target variable}} & \multicolumn{2}{c|}{\# Selected Attributes} & {\multirow{2}[4]{*}{Distinctness}}\\ \cline{3-4} 
 & & Link Analysis Off & Link Analysis On / Rvars & \\\hline
MovieLens & Horror(M) & 2 & 2 / 0 &  0.0 \\\hline
Mutagenesis & inda(M) & 3 & 3 / 0 & 0.0 \\\hline
Financial & balance(T) & 3 & 2 / 1 & 1.0 \\\hline
Hepatitis & sex(D) & 1 & 2 / 1 & 0.5 \\\hline
IMDB & avg\_revenue(D) & 5 & 2 / 1 & 1.0 \\\hline
Mondial & percentage(C) & Empty CT  & 4 / 0 & 1.0 \\\hline
UW-CSE & courseLevel(C) & 1 & 4 / 2 & 1.0 \\\hline
\end{tabular}
}
\caption{Selected Features for Target variables for  Link Analysis Off vs. Link Analysis On. Rvars denotes the number of relationship features selected. 
\label{table:feature-select}}
\end{table}

Distinctness measures how different the selected feature subset is with link analysis on and off, on a scale from 0 to 1. Here 1 = maximum dissimilarity.
Table~\ref{table:feature-select} compares the feature sets selected. In almost all datasets, sufficient statistics about negative relationships generate new relevant features for classification. 
In 4/7 datasets, the feature sets are disjoint (coefficient = 1). For the Mutagenesis and MovieLens data sets, no new features are selected. 

While Table~\ref{table:feature-select} provides evidence that relationship features are relevant to the class label, it is not straightforward to evaluate their usefulness by adding them to a relational classifier. The reason for this is that 
relational classification requires some kind of mechanism for aggregating/combining information from a target entity's relational neighborhood. There is no standard method for performing this aggregation \cite{Dzeroski2001c}, so one needs to study the interaction of the aggregation mechanism with relationship features. We leave for future work experiments that utilize relationship features in combination with different relational classifiers.

\subsection{Association Rules} A widely studied task is finding interesting association rules in a database. We considered association rules of the form $\it{body} \rightarrow \it{head}$, where $\it{body}$ and $\it{head}$ are conjunctive queries. An example of a cross-table association rule for Financial is 
$$\it{statement\_freq.(Acc)} = \it{monthly} \rightarrow \it{HasLoan}(\it{Acc},\it{Loan}) = \true.$$
We searched for interesting rules using both the link analysis off and the link analysis on contingency tables for each database. The idea is that if a relationship variable is relevant for other features, it should appear in an association rule. With link analysis off, all relationship variables always have the value $\true$, so they do not appear in any association rule. We used Weka's Apriori implementation to search for association rules in both modes. The interestingness metric was Lift. Parameters were set to their default values. Table~\ref{table:association} shows the number of rules that utilize relationship variables with link analysis on, out of the top 20 rules. In all cases, a majority of rules utilize relationship variables,  in Mutagenesis and IMDB all of them do. 
\begin{table}[htbp] \centering
\resizebox{0.5 \textwidth}{!}{
\begin{tabular}{|c|c|c|c|c|c|c|c|}
\hline
\multicolumn{1}{|c|}{Dataset} & MovieLens & Mutagenesis & Financial & Hepatitis & IMDB  & Mondial & UW-CSE \\
\hline
\# rules  & 14/20  & 20/20 & 12/20 & 15/20 & 20/20 & 16/20 & 12/20 \\ 
\hline 
\end{tabular}%
}
\caption{Number of top 20 Association Rules that utilize relationship variables.}
  \label{table:association}%
\end{table}%

\subsection{Learning Bayesian Networks}

Our most challenging application is constructing a Bayesian network (BN) for a relational database. For single-table data, Bayesian network learning has been considered as a benchmark application for precomputing sufficient statistics \cite{Moore1998,lv2012}. A Bayesian network structure is a directly acyclic graph whose nodes are random variables. Given an assignment of values to its parameters, a Bayesian network represents a joint distribution over both attributes and relationships in a relational database. Several researchers have noted the usefulness of constructing a graphical statistical model for a relational database ~\cite{Graepel_CIKM13,Wang2008}.
For data exploration, a Bayes net  model provides a succinct graphical representation of complex statistical-relational correlations. The model also supports probabilistic reasoning for answering ``what-if'' queries about the probabilities of uncertain outcomes. 

We used the previously existing learn-and-join method (LAJ), which is the state of the art for Bayes net learning in relational databases \cite{Schulte2012}. The LAJ method takes as input a contingency table for the entire database, so we can apply it with both link analysis on and link analysis off to obtain two different BN structures for each database. Our experiment is the first evaluation of the LAJ method with link analysis on. We used the LAJ implementation provided by its creators.
We score all learned graph structures using the same full contingency table with link analysis on, so that the scores are comparable. The idea is that turning link analysis on should lead to a different structure that represents correlations, involving relationship variables, that exist in the data.

\subsubsection{Structure Learning Times} 
Table~\ref{table:runtimes} provides the model search time for structure learning with link analysis on and off. Structure learning is fast, even for the largest contingency table  IMDB (less than 10 minutes run-time). With link analysis on, structure learning takes more time as it processes more information. 
In both modes, the run-time for building the contingency tables (Table~\ref{table:cttimes}) dominates the structure learning cost. For the Mondial database, there is no case where all relationship variables are simultaneously true, so with link analysis off the contingency table is empty.

\begin{table} \centering
\resizebox{2.5in}{!}{
\begin{tabular}[c]
{|l|R{3cm}|R{3cm}|}\hline
 \textbf{Dataset}  & \textbf{Link Analysis On } & \textbf{Link Analysis Off } \\\hline
Movielens & 1.53&1.44 \\\hline
Mutagenesis & 1.78&1.96 \\\hline
Financial  &96.31& 3.19 \\\hline
Hepatitis   & 416.70& 3.49\\\hline
IMDB   & 551.64 & 26.16 \\\hline
Mondial & 190.16&N/A\\\hline
UW-CSE & 2.89&2.47 \\\hline
\end{tabular}
} 
\caption{Model Structure Learning Time  in seconds.  
 \label{table:runtimes}}
\end{table}

\subsubsection{Statistical Scores.}
We report two model metrics, the log-likelihood score, and the model complexity as measured by the number of parameters. The \textbf{log-likelihood} is denoted as $L(\hat{G},\d)$, where $\hat{G}$ is the BN $\G$ with its parameters instantiated to be the maximum likelihood estimates given the dataset $\d$, and the quantity $L(\hat{G},\d)$ is the log-likelihood of $\hat{G}$ on $\d$. 
We use the relational log-likelihood score defined in \cite{Schulte2011}, 
which differs from the standard single-table Bayes net  likelihood 
only by replacing counts by frequencies  so that scores are comparable across different nodes and databases. 
To provide information about the qualitative graph structure learned, we report edges learned that point to a relationship variable as a child. Such edges can be learned only with link analysis on. We distinguish edges that link relationship variables---R2R---and that link attribute variables to relationships---A2R.

\begin{table}[htb] 
\begin{center}
\resizebox{0.5 \textwidth}{!}{
\begin{tabular}{|m{2.5cm}|R{2cm}|R{1.8cm}|R{1cm}|R{1cm}| }
\hline \textbf{Movielens } &{log-likelihood} &{\#Parameter}& {R2R}&{A2R}\\
\hline
Link Analysis Off    & -4.68 & \textbf{164} & 0     & 0 \\
\hline
Link Analysis On  & \textbf{-3.44} & 292   & 0     & 3 \\
\hline
		\end{tabular}
}
\end{center}

\begin{center}
\resizebox{0.5 \textwidth}{!}{
\begin{tabular}{|m{2.5cm}|R{2cm}|R{1.8cm}|R{1cm}|R{1cm}|  }\hline 
\textbf{Mutagenesis } &{log-likelihood} &{\#Parameter}& {R2R}&{A2R}\\	 \hline 
Link Analysis  Off   & -6.18 & \textbf{499} & 0     & 0 \\ \hline
Link Analysis  On  & \textbf{-5.96} & 721   & 1     & 5  \\ \hline
		\end{tabular}
}
\end{center}
\begin{center}
\resizebox{0.5 \textwidth}{!}{
\begin{tabular}{|m{2.5cm}|R{2cm}|R{1.8cm}|R{1cm}|R{1cm}|  }
\hline \textbf{Financial } &{log-likelihood} &{\#Parameter}& {R2R}&{A2R}\\	  \hline
Link Analysis Off   & -10.96 & 11,572 & 0     & 0 \\ \hline
Link Analysis On  & \textbf{-10.74} & \textbf{2433} & 2     & 9 \\ \hline
		\end{tabular}
}
\end{center}	

\begin{center}
\resizebox{0.5 \textwidth}{!}{
\begin{tabular}{|m{2.5cm}|R{2cm}|R{1.8cm}|R{1cm}|R{1cm}|  }
\hline \textbf{Hepatitis  } &{log-likelihood} &{\#Parameter}& {R2R}&{A2R}\\	  \hline
Link Analysis Off   & \textbf{-15.61} & 962   & 0     & 0 \\ 			\hline
Link Analysis On  & -16.58 & \textbf{569} & 3     & 6 \\ 			\hline
		\end{tabular}
}
\end{center}

\begin{center}
\resizebox{0.5 \textwidth}{!}{
\begin{tabular}{|m{2.5cm}|R{2cm}|R{1.8cm}|R{1cm}|R{1cm}|  }
\hline \textbf{IMDB  } &{log-likelihood} &{\#Parameter}& {R2R}&{A2R}\\	  \hline
Link Analysis Off    & -13.63 & 181,896 & 0     & 0 \\ 		\hline
Link Analysis On  & \textbf{-11.39} & \textbf{60,059} & 0     & 11 \\ 		\hline
		\end{tabular}
}
\end{center}

\begin{center}
\resizebox{0.5 \textwidth}{!}{
\begin{tabular}{|m{2.5cm}|R{2cm}|R{1.8cm}|R{1cm}|R{1cm}|  }
\hline \textbf{Mondial  } &{log-likelihood} &{\#Parameter}& {R2R}&{A2R}\\	  \hline
Link Analysis Off   &   N/A    &  N/A     & N/A     & N/A \\ 		\hline
Link Analysis On& -18.2 & 339   & 0     & 4 \\ 		\hline
		\end{tabular}
}
\end{center}

\begin{center}
\resizebox{0.5 \textwidth}{!}{
\begin{tabular}{|m{2.5cm}|R{2cm}|R{1.8cm}|R{1cm}|R{1cm}|  }
\hline \textbf{UW-CSE   } &{log-likelihood} &{\#Parameter}& {R2R}&{A2R}\\	  \hline
Link Analysis Off   & \textbf{-6.68} & 305   & 0     & 0 \\  \hline
Link Analysis On & -8.13 & \textbf{241} & 0     & 2 \\  \hline
		\end{tabular}
}
\end{center}
\caption{Comparison of Statistical Performance of Bayesian Network Learning.
}
\label{table:result_scores}
\end{table}

Structure learning can use the new type of dependencies to find a better, or at least different, trade-off between model complexity and model fit.
On two datasets (IMDB and Financial), link analysis leads to a superior model that achieves better data fit with fewer parameters. These are also the datasets with the most complex relational schemas (see Table~\ref{table:datasetsize}). On IMDB in particular, considering only positive links leads to a very poor structure with a huge number of parameters.
On four datasets, extra sufficient statistics lead to different trade-offs: On MovieLens and Mutagenesis, link analysis leads to better data fit but higher model complexity, and the reverse for Hepatitis and UW-CSE.

\section{Related Work} 

{\em Sufficient Statistics for Single Data Tables.} Several data structures have been proposed for storing sufficient statistics defined on a {\em single} data table. 
One of the best-known are ADtrees \cite{Moore1998}. 
An ADtree provides a memory-efficient data structure for {\em storing} and retrieving sufficient statistics once they have been computed. 
In this paper, we focus on the problem of {\em computing} the sufficient statistics, especially for the case where the relevant rows have not been materialized. 
Thus ADtrees and contingency tables are complementary representations for different purposes: contingency tables support a computationally efficient block access to sufficient statistics, whereas ADtrees provide a memory efficient compression of the sufficient statistics. 
An interesting direction for future work is to build an ADtree for the contingency table once it has been computed. 

{\em Relational Sufficient Statistics.} 
Schulte {\em et al.} review previous methods for computing statistics with negative relationships \cite{Schulte2014}. They show that the fast M\"obius transform can be used in the case of multiple negative relationships. 
Their evaluation considered only Bayes net parameter learning with only one relationship. 
We examined computing joint sufficient statistics over the entire database. 
Other novel aspects are the $\ct$-table operations and using the relationship chain lattice to facilitate dynamic programming.

\section{Conclusion} \label{sec:conclusion} 
Utilizing the information in a relational database for statistical modelling and pattern mining requires fast access to multi-relational sufficient statistics, that combine information across database tables. 
We presented an efficient dynamic program that computes sufficient statistics for any combination of positive {\em and} negative relationships, starting with a set of statistics for positive relationships only.
Our dynamic program performs a virtual join operation, that counts the number of statistics in a table join without actually constructing the join. We showed that the run time of the algorithm is $O(r \log r)$, where $r$ is the number of sufficient statistics to be computed.
The computed statistics are stored in contingency tables.
We introduced contingency table algebra, an extension of relational algebra, to elegantly describe and efficiently implement the dynamic program. 
Empirical evaluation on seven benchmark databases demonstrated the scalability of our algorithm; we compute sufficient statistics with positive and negative relationships in databases with over 1 million data records.  
Our experiments illustrated how access to sufficient statistics for both positive and negative relationships enhances feature selection, rule mining, and Bayesian network learning.

\emph{Limitations and Future Work.} 
Our dynamic program scales well with the number of rows, but not with the number of columns and relationships in the database. 
This limitation stems from the fact that the contingency table size grows exponentially with the number of random variables in the table. In this paper, we applied the algorithm to construct a large table for {\em all} variables in the database. We emphasize that this is only one way to apply the algorithm. The M\"obius Join algorithm efficiently finds cross-table statistics for any set of variables, not only for the complete set of all variables in the database. An alternative is to apply the virtual join only up to a prespecified relatively small relationship chain length.
Another possibility is to use postcounting \cite{lv2012}: Rather than precompute a large contingency table prior to learning, compute many small contingency tables for  small subsets of variables on demand during learning. 
%
%


In sum, our M\"obius Virtual Join algorithm efficiently computes query counts which may involve any number of {\em positive and negative }relationships. 
These sufficient statistics support a scalable statistical analysis of  associations among both relationships and attributes in a relational database.

\section*{Acknowledgments}
This research was supported by a Discovery grant to Oliver Schulte by the Natural Sciences and Engineering Research Council of Canada. 
Zhensong Qian was  supported by a grant from the China Scholarship Council.

\bibliographystyle{abbrv}
\bibliography{master} 

\balance

\end{document}